\newtheorem{theorem}{Theorem}
\title{Large Language Model-Enhanced Algorithm Selection: \\Towards Comprehensive Algorithm Representation}
\author{
Xingyu Wu$^1$
\and
Yan Zhong$^2$\and
Jibin Wu$^1$ \thanks{Corresponding author: Jibin Wu.} \and
Bingbing Jiang$^3$\And
Kay Chen Tan$^1$
\affiliations
$^1$The Hong Kong Polytechnic University\\
$^2$Peking University\\
$^3$Hangzhou Normal University
\emails
\{xingy.wu,jibin.wu,kaychen.tan\}@polyu.edu.hk,
zhongyan@stu.pku.edu.cn,
jiangbb@hznu.edu.cn
}
\begin{document}

\maketitle

\begin{abstract}
    Algorithm selection, a critical process of automated machine learning, aims to identify the most suitable algorithm for solving a specific problem prior to execution. Mainstream algorithm selection techniques heavily rely on problem features, while the role of algorithm features remains largely unexplored. Due to the intrinsic complexity of algorithms, effective methods for universally extracting algorithm information are lacking. This paper takes a significant step towards bridging this gap by introducing Large Language Models (LLMs) into algorithm selection for the first time. By comprehending the code text, LLM not only captures the structural and semantic aspects of the algorithm, but also demonstrates contextual awareness and library function understanding. The high-dimensional algorithm representation extracted by LLM, after undergoing a feature selection module, is combined with the problem representation and passed to the similarity calculation module. The selected algorithm is determined by the matching degree between a given problem and different algorithms. Extensive experiments validate the performance superiority of the proposed model and the efficacy of each key module. Furthermore, we present a theoretical upper bound on model complexity, showcasing the influence of algorithm representation and feature selection modules. This provides valuable theoretical guidance for the practical implementation of our method.
\end{abstract}

\section{Introduction}

Performance complementarity, a phenomenon where no single algorithm consistently outperforms all others across diverse problem instances, is a well-established reality in the realm of optimization and learning problems \cite{kerschke2019automated}. Over the past few years, the growing interest in automated algorithm selection techniques has become evident. These techniques aim to tackle the challenge of selecting the most appropriate algorithm from a predefined set for a given problem instance automatically \cite{ruhkopf2022masif,heins2023study}. Most existing techniques rely on two sources of information: (1) the features of each problem instance and (2) the historical performance of various algorithms across problem instances \cite{pio2023review}. Then, regression \cite{xu2008satzilla} or ranking \cite{abdulrahman2018speeding} based machine learning models are used to establish a mapping from problem features to algorithm performance or the best-performing algorithm. Additionally, there are also methods based on collaborative filtering \cite{misir2017alors} or instance similarity \cite{amadini2014sunny}.

Many contemporary algorithm selection methodologies predominantly treat algorithms as black boxes, concentrating primarily on the features of given problems. Nonetheless, it is essential to acknowledge that, for the majority of scenarios, algorithm feature constitutes a vital source of information, and overlooking it may lead to performance degradation \cite{tornede2020extreme}. Relying exclusively on problem features limits the model to learning a unidirectional mapping from the problem to the algorithm, which does not align with the potentially bidirectional nature of the problem-algorithm relationships. An ideal algorithm selection model should not only identify the suitable algorithm for a given problem but also understand the types of problems for which an algorithm is well-suited, i.e., the interactive synthesis of problem representation and algorithm representation.

\begin{figure*}[t]
\begin{center}
\includegraphics[width=1.00\textwidth]{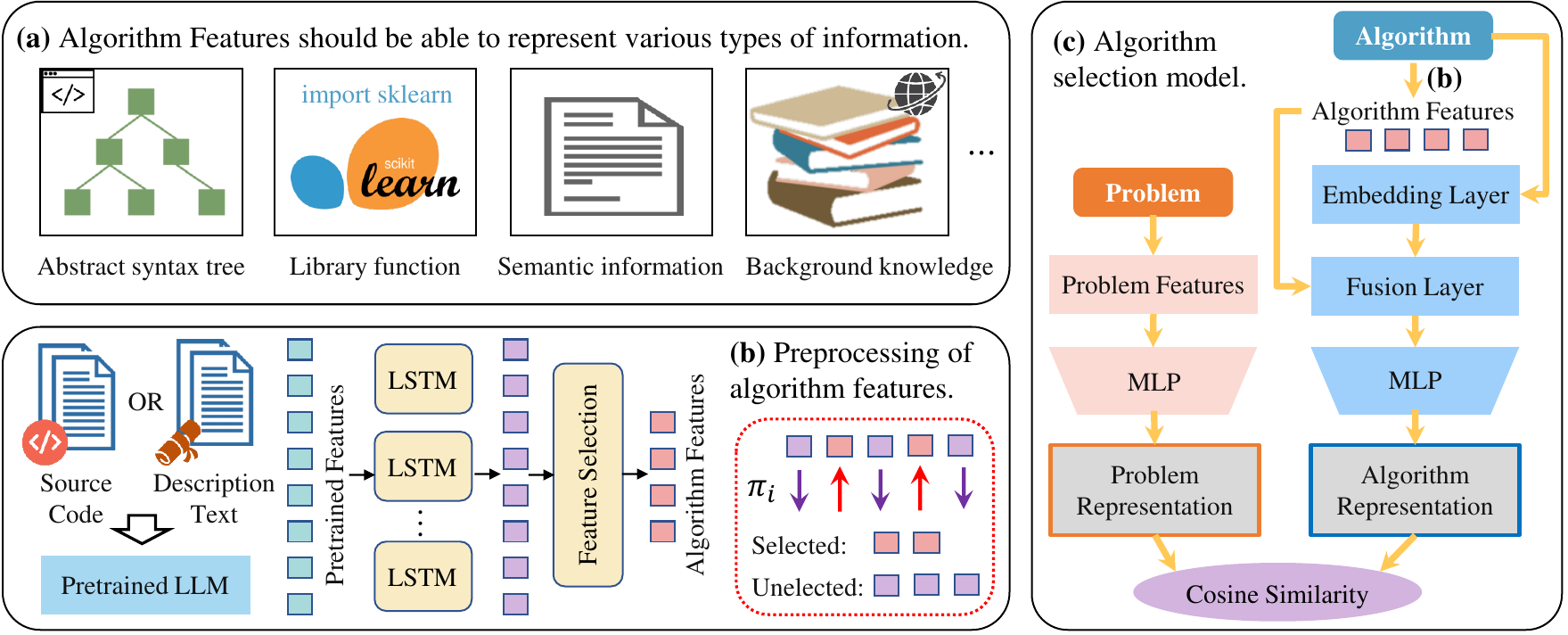}
\end{center}
\caption{The framework of AS-LLM: (a) illustrates the various types of information that influence algorithm representation. (b) shows the algorithm features extracted by LLM and their preprocessing using feature selection. (c) outlines the proposed algorithm selection framework.}
\label{Fig_Framework}
\end{figure*}

The limited consideration given to algorithm features is primarily attributed to the following practical challenges. Firstly, the inherent complexity and diversity of algorithms make it difficult to develop a universally applicable method for algorithm representation. For instance, among the limited studies related to algorithm features, the majority of them manually design features pertaining to specific scenarios or algorithm types, such as features based on hyperparameters \cite{hough2006modern,tornede2020extreme}, execution process \cite{de2021explorative}, model structures, or loss functions \cite{hilario2009data}. These features often have different meanings and scales even within the same category of algorithms, rendering them impractical for universal adaptation across diverse algorithm types. On the other hand, quantifying and elucidating algorithm features indeed presents a formidable challenge. The intricate logic and rich contextual information make it difficult to devise a feature extraction method that captures key algorithmic information. One notable endeavor involves using macro statistical features (e.g., lines of code) and abstract syntax tree (AST) features (e.g., node/edge count) of code to represent algorithms \cite{pulatov2022opening}. While these features can characterize the structural aspects of algorithms, they fall short of directly encapsulating semantic relationships and contextual information within the code. Additionally, certain core algorithms often depend on library functions, which exist at a higher level of abstraction and cannot be directly manifested in the AST. This discrepancy results in the masking of critical information.

With the advent of the pretrained Large Language Models (LLMs) era \cite{ouyang2022training,wu2024evolutionary}, the extraction of algorithm features from code-related text has become significantly more attainable. LLMs exhibit three notable advantages in algorithm representation: (1) Rich and robust information content: LLMs not only encapsulate the syntax structure of code represented by AST but also capture nuanced semantic information and exhibit contextual awareness \cite{guo2020graphcodebert}, as shown in Figure \ref{Fig_Framework}(a), which allows algorithm features to maintain robustness across diverse programming languages. (2) Vast knowledge repository: LLMs leverage extensive knowledge from large-scale code repositories, enabling them to represent common patterns, library function usage, and module relationships \cite{chen2021evaluating}. This imparts superior domain adaptation capabilities to LLMs in the algorithm selection tasks. (3) Automation and universality: LLMs obviate the need for manual feature design as existing methods, but automatically extract features from code text, demonstrating universal applicability across various scenarios and algorithms. Even in scenarios where code is unavailable, LLMs can extract pertinent information from algorithm description text (e.g., pseudo-code).

Leveraging the powerful representation capability of LLMs, we introduce the proposed Algorithm Selection model based on LLM (AS-LLM). With the support of pretrained LLMs for code or text (e.g., \cite{chen2021evaluating}), we efficiently represent the algorithm with minimal training overhead. Simultaneously, it is worth noting that LLM-based representations possess high dimensions \cite{chen2021evaluating}, and not all features are directly relevant to algorithm selection tasks. To further enhance the model, we integrate a feature selection module depicted in Figure \ref{Fig_Framework}(b), which carefully scrutinizes the pretrained features and extracts a subset of algorithm features that are most significant by deriving importance weights for them. Moreover, AS-LLM does not directly concatenate algorithm and problem features for performance regression. Instead, it adopts distinct treatment methodologies for these two feature types, as shown in Figure \ref{Fig_Framework}(c). Independent feature processing networks are employed to obtain problem and algorithm representations of equal length separately, which could mitigate premature interference between the two feature types. These representations are then intricately fused through similarity calculation modules, ultimately guiding the selection decision based on the matching degree between a given problem and different algorithms. The key contributions of this study are summarized as follows:

\begin{itemize}
\item To the best of our knowledge, this study represents a pioneering application of LLM in algorithm selection tasks, leveraging the powerful representation capability of LLM to extract discriminative algorithm features. We also introduce an algorithm feature selection module to identify critical features for algorithm selection.
\item The comprehensive algorithm representation bestows AS-LLM with at least three advantages: (i) A more nuanced modeling of the bidirectional nature of algorithm selection tasks; (ii) The generalization capability to novel algorithms not encountered during training; (iii) Robust performance superiority in different scenarios.
\item We not only highlight the performance superiority of AS-LLM through empirical studies but also provide a rigorous upper bound on its model complexity. This theoretical validation serves as a fundamental basis for model design and practical implementation.
\end{itemize}

\section{Related Work}
\label{background}

In this section, Section 2.1 introduces the task definition of algorithm selection and provides a comprehensive review of algorithm selection techniques based on problem features, where a clear distinction is made between feature-based and feature-free approaches. In Section 2.2, a detailed examination of existing algorithm feature-based algorithm selection techniques is presented, and the disparities between probing features and algorithm features are also highlighted.

\subsection{Algorithm Selection}

Algorithm selection aims to choose the appropriate algorithm for each problem instance from a set of algorithms \cite{rice1976algorithm}. Traditionally, the problem of per-instance algorithm selection can be defined as follows: Given a problem set $P$, an algorithm set $A$ for solving problem instances in $P$, and a performance metric $\mathcal{M}$: $P\times A \rightarrow \mathbb{R}$ which quantifies the performance of any algorithm $a\in A$ on each instance $p\in P$, per-instance algorithm selection should construct a selector $\mathcal{S}: P \rightarrow A$ that assigns any problem instance $p\in P$ to an algorithm $a\in A$, optimizing the overall performance expectation $\mathbb{E}[\mathcal{M}(p,\mathcal{S}(p))]$ on $P$ according to the metric $\mathcal{M}$ \cite{kerschke2019automated}. In the following, we first review the classical types of algorithm selection techniques, followed by a focus on the study of algorithm features in existing works.

Regression-based techniques \cite{xu2008satzilla} aim to predict the performance of different algorithms based on a set of features. Similarly, ranking-based approaches \cite{cunha2018label,abdulrahman2018speeding} assign a rank or score to each algorithm, indicating its relative suitability for a given problem. The ranking can be determined using various methods, such as pairwise comparisons or learning-to-rank algorithms. Some studies formalize algorithm selection as a collaborative filtering problem \cite{misir2017alors,fusi2018probabilistic} and utilize a sparse matrix with performance data of only a few algorithms on each problem instance. Similarity-based approaches \cite{amadini2014sunny,kadioglu2010isac} select algorithms based on the similarity between the current problem instance and previously encountered instances. Moreover, hybrid methods \cite{hanselle2020hybrid,fehring2022harris} combine multiple techniques above and leverage their strengths to improve the accuracy and robustness of algorithm selection. For more detailed information, please refer to \cite{kerschke2019automated}.

\paragraph{Difference between Feature-based and Feature-free Algorithm Selection:} The AS-LLM proposed in this paper falls under the feature-based category. The distinction between feature-based and feature-free algorithm selection \cite{alissa2023automated} lies in two aspects: 1) Feature-free methods aim to avoid manual crafting of features by designing models that automatically extract problem representations, while feature-based methods rely on pre-calculated feature profiles. 2) Feature-free models are typically specific to certain problem types and may not be universally applicable across a wide range of scenarios, as the problem representation module need to be tailored to each specific scenario, such as traveling salesman problem \cite{zhao2021towards} or continuous optimization problem \cite{seiler2024deep}. Conversely, feature-based methods only focus on the algorithm selection task itself. These models are generally applicable to various scenarios where problem features have already been extracted, such as scenarios in ASlib. Due to their characteristics, feature-free methods are usually evaluated in a specific scenario, while feature-based methods need to be validated across diverse scenarios, as shown in Section 5.

\subsection{Algorithm Feature-based Algorithm Selection}

Currently, most research focuses on using problem features to achieve algorithm selection. Although there is relatively little literature on using algorithm features, there have been some attempts made in the following works.

\citeauthor{hough2006modern} are among the early researchers to consider the measurement of algorithm features. They extract hyperparameters of optimization algorithms and additionally construct five features to describe how algorithms handle optimization problems \cite{hough2006modern}. Similarly, \citeauthor{tornede2020extreme} [\citeyear{tornede2020extreme}] discuss the feature representation of machine learning algorithms. They also employ the values of hyperparameters as algorithm features and then concatenate algorithm features and problem features to build a regression or ranking model. These features, along with problem features, are inputted into the decision tree and SVM models to identify the optimal algorithm for a problem instance. \citeauthor{hilario2009data} suggest using information related to the structure, parameters, and cost function of a model as algorithm features without empirical study on this yet \cite{hilario2009data}. Until recently, \citeauthor{pulatov2022opening} [\citeyear{pulatov2022opening}] utilize macro-level features of code (e.g., lines of code and cyclomatic complexity), as well as abstract syntax tree features (e.g., the number of nodes and edges), to describe algorithm characteristics. This allows for the development of a unified model that can regress the performance of all algorithms, instead of building a separate regression model for each algorithm, as in their previous work. Moreover, \citeauthor{de2021explorative} [\citeyear{de2021explorative}] design a method for extracting time series features specifically for algorithm representation of Covariance Matrix Adaptation Evolution Strategy and its variants \cite{hansen2001completely}. As analyzed in Section 1, these manually designed features either lack generality and only serve specific algorithm selection scenarios, or fail to fully capture the entire information of algorithms. This paper will introduce a novel algorithm representation technique based on the LLM and design a new framework for algorithm selection.

\paragraph{Difference between Probing Features and Algorithm Features:} To underscore the significance of algorithm features, we delve into the distinction between another closely aligned feature known as probing feature \cite{hutter2014algorithm}. Obviously, they are not algorithm features. Instead, they fall under the category of problem features, encompassing high-level statistics derived from short probing runs of a solver on specific problem instances. While its extraction process involves the algorithm execution, these features primarily describe information specific to the corresponding problem instances. In other words, they capture variations observed among different problem instances during short `probing' runs of the same solver. Consequently, probing features are primarily capable of distinguishing between problem instances but do not differentiate between candidate algorithms. On the other hand, the algorithm features discussed in this paper precisely describe the characteristics of the algorithms themselves. By extracting discriminative information from algorithmic code, these features directly reflect the distinctions between algorithms.

\section{Algorithm Selection Based on LLMs}

This section introduces the main details of AS-LLM \footnote{The implementation of AS-LLM is available at \url{https://github.com/wuxingyu-ai/AS-LLM}}. As illustrated in Figure \ref{Fig_Framework}, AS-LLM primarily consists of three modules: the problem representation module, algorithm representation module, and similarity calculation module. Among them, the extraction and utilization of algorithm features are key aspects of this study. For each candidate algorithm, its corresponding code snippet is an easily accessible resource. Even in the absence of open-source code, the algorithm's descriptive text (such as pseudo-code, papers, or manuals) is available and can serve as a viable alternative. Leveraging the code representation capability of pretrained LLMs, we aim to comprehensively extract and characterize the essence of each algorithm from their code-related texts. In this endeavor, the employed LLM can take on multiple forms, as long as its pretraining data contains code text. Hence, it can either be a general-purpose model pretrained on a vast corpus of text data (e.g., GPT-3 \cite{floridi2020gpt}) or a specialized model tailored for code comprehension and generation (e.g., CodeBERT \cite{feng2020codebert}). The model's ability to comprehend code syntax, structure, and semantics allows it to identify critical algorithmic patterns and techniques. This entails recognizing loops, conditionals, data structures, and their interplay within the code.

Specifically, for $\forall a \in A$, algorithm indices are mapped to corresponding continuous vectors, obtained from a pretrained LLM:
\begin{equation}
\mathbf{E}_a = \text{LLM-Embedding}(a) \in \mathbb{R}^{e}
\end{equation}
where $e$ is determined by the output scale of the chosen LLM. These embeddings can be frozen during training to preserve the knowledge encoded within them. When there is enough training data available, parameters in the LLM can be fine-turned together with the other parts of the model, and the embedding layer is updated according to the output of LLM. Given the sequential nature of algorithm features from LLM, we utilize an LSTM to handle these algorithm features:
\begin{equation}
\begin{aligned}
&\mathbf{H}_a = \text{LSTM}(\mathbf{E}_a)\\
&\mathbf{F}_a = \text{Linear}(\text{Concatenate}(\mathbf{h}_{-1}, \mathbf{h}_0))
\end{aligned}
\end{equation}
where $\mathbf{h}_{-1}, \mathbf{h}_0 \in \mathbf{H}_a$ are the LSTM encoder's final hidden state and the initial hidden state. Herein, we chose to use LSTM instead of other advanced models such as Transformer primarily due to the limitations posed by the training data scale in the algorithm selection. While Transformer exhibits strong modeling capabilities for sequential data, it typically requires a larger amount of data for training. Given the finite number of candidate algorithms and the relatively small scale of the problems in algorithm selection tasks, the performance of the Transformer model could be constrained. In contrast, LSTM performs better when handling a small number of training samples.

As mentioned earlier, the representation of algorithm-related text by LLM is typically high-dimensional, which can range in the hundreds or thousands, there exists a disparity in dimensionality between algorithm features and problem features. Among these features, not all of them are relevant to the algorithm selection task. Therefore, it is necessary to add a feature selection module \cite{wu2020multi,wu2022multi} in front of the model's main body to select the predictive features based on the task information. Inspired by \cite{wang2022autofield,wu2019accurate}, we first arrange a parameter $\pi_i$ to the $i$-th features in $\mathbf{F}_a$, which are the same in number as the features and indicate whether a feature should be selected. Then, we use Gumbel distributions to generate samples from classification distributions and make the feature selection operation differentiable. Specifically, for each parameter $\pi_i\in[0,1]$, we sample a random variable $\sigma^{+}_i,\sigma^{-}_i \sim \text{Gumbel}(0,1)$ from the standard Gumbel distribution to obtain noise term. According to its property,
\begin{equation}
\mathbb{P}(\log\pi_i+\sigma^{+}_i>\log(1-\pi_i)+\sigma^{-}_i)=\pi_i.
\end{equation}
Using the softmax function as a continuous and differentiable approximation \cite{jang2016categorical}, the $i$-th weighted feature ${\mathbf{F}^\prime_a}_i$ is:
\begin{equation}
{\mathbf{F}^\prime_a}_i = \frac{\exp\left(\frac{\log\pi_i+\sigma^{+}_i}{\tau}\right)}{\exp\left(\frac{\log\pi_i+\sigma^{+}_i}{\tau}\right) + \exp\left(\frac{\log(1-\pi_i)+\sigma^{-}_i}{\tau}\right)}{\mathbf{F}_a}_i.
\end{equation}
where the weight indicates the probability of the $i$-th features being selected, and the parameter $\tau$ is the temperature parameter. When $\tau$ is greater than 0, it ensures the smoothness of the distribution and facilitates better gradients. As $\tau$ approaches 0, the values of $\mathbb{P}(i)$ converge towards $\pi_i$. Conversely, as $\tau$ approaches infinity, the probability tends toward a uniform distribution. For better practical usage, the feature selection module is trained together with other parts of the model. Based on the obtained values of the parameters $\pi_i$, the module selects the top-$k$ most important features. These selected features are then retained, and the feature selection module is removed. The other parts of the model are subsequently retrained without the feature selection module.

In addition to the pretrained features extracted by LLM, AS-LLM also utilizes an embedding layer to adaptively capture the algorithm representation:
\begin{equation}
\label{eq_5}
\mathbf{E}^\prime_a = \text{Embedding}(a) \in \mathbb{R}^{k}
\end{equation}
where $k$ indicates the same dimension with $\mathbf{F}^\prime_a$. This layer is directly learned from the matching relationship between the algorithm and the problem. Therefore, the embedding layer is only applicable when the candidate algorithm remains unchanged and can accurately represent the algorithms that have appeared in the training data. When the candidate algorithm remains unchanged, the adaptive features will bring performance gains in prediction. The adaptive features and pretrained features are fused in the fusion layer by weighted addition with a parameter $\alpha\in[0,1]$, resulting in an algorithm feature set $\mathbf{V}_a$. The problem features $\mathbf{F}_p$ and the algorithm features will each go through their respective multilayer perceptron (MLP) modules and then undergo a calculation of the cosine distance $d$:
\begin{equation}
\label{eq_6}
\begin{aligned}
&\mathbf{V}_p = \text{MLP}_P(\mathbf{F}_p) \in \mathbb{R}^{m} \\
&\mathbf{V}_a = \text{MLP}_A\left[\alpha\mathbf{F}^\prime_a + (1-\alpha)\mathbf{E}^\prime_a\right] \in \mathbb{R}^{m} \\
&d = \frac{\mathbf{V}_a \cdot \mathbf{V}_p}{\|\mathbf{V}_a\| \cdot \|\mathbf{V}_p\|}
\end{aligned}
\end{equation}
Cosine similarity measures the cosine of the angle between these vectors and provides a score indicating their similarity. A higher $d$ suggests a better match between the problem and the algorithm. Finally, the cosine similarity is concatenated with problem features and algorithm features, and jointly influences the model's final output through an MLP layer:
\begin{equation}
\text{Similarity}(a,p) = \text{MLP}_S\left[\text{Concatenate}(\mathbf{V}_a, \mathbf{V}_p, d)\right].
\end{equation}
The model's output $\text{Similarity}(a,p)$ is used to predict whether the input problem and algorithm are mutually compatible. The $\text{Similarity}(a,p)$ can be understood in different contexts, such as the matching degree or predicted performance, which is contingent upon the training approach of the model.

\section{Upper Bound of the Model Complexity}
\label{theory}

Due to the incorporation of algorithm features in this study, the model unavoidably becomes more intricate in comparison to models relying solely on problem features. Consequently, this section aims to delve into the model complexity and endeavor to extract some theoretical conclusions that can guide real-world applications. It is important to note that, to enable generalization over new candidate algorithms, both the embedding layer and the cosine distance calculation should be excluded from AS-LLM. The rationale for removing the embedding layer has been previously discussed after Eq. (\ref{eq_5}). The elimination of the cosine distance calculation $d$, on the other hand, is primarily intended to establish a theoretical upper bound on the model complexity, which will be elucidated during the derivation process of Theorem 1 in this section. We denote the variant model as AS-LLM$_g$, and analyse its complexity as follows.

Denote the space of AS-LLM$_g$ as $\mathcal{F}$, $\mathbf{f}(x_i)\in\mathcal{F}$ is a real-valued network mapping the instance $x_i$ to $\mathbb{R}$. For analytical convenience, we consider AS-LLM$_g$ as a unified MLP, including all sub-MLPs either for problem representation, algorithm representation, or similarity calculation. This transformation is mathematically equivalent, as all parameters and activation functions in this unified MLP can be derived from the original model without extra computation. For instance, the parameters $W^{(1)}$ of the MLP's first layer can be obtained by combining the parameters $W^{(1)}_p$ and $W^{(1)}_a$ from the $\text{MLP}_P$ and $\text{MLP}_A$ in Eq. (\ref{eq_6}), respectively, i.e., $W^{(1)}=\begin{pmatrix}
    W^{(1)}_a & O \\
    O & W^{(1)}_p \\
\end{pmatrix}$. Similarly, each activation function also corresponds to its counterpart in the original model. Under this simplification, AS-LLM$_g$ is treated as an MLP structure, facilitating the derivation of concise forms for the upper bounds of model complexity. Firstly, we introduce the concept of inductive Rademacher Complexity \cite{koltchinskii2001rademacher} as a measure of the model's complexity, which is calculated as:
\begin{equation}
\label{eq_rc}
\hat{\mathfrak{R}}_{|\mathcal{S}|}\left(\mathcal{F}\right) = \frac{1}{|\mathcal{S}|} \mathbf{E}_{\boldsymbol{\sigma}}\left[\sup _{\mathbf{f} \in \mathcal{F}} \sum_{x_i\in\mathcal{S}} \sigma_i \mathbf{f}\left(x_i\right)\right]
\end{equation}
where $|\mathcal{S}|$ is the scale of training data $\mathcal{S}$, and $\bm{\sigma} = \{ \sigma_i \}_{i=1}^n$ is an independent uniform $\{\pm 1\}$-valued random variables, $\sigma_i=1$ with probability $\frac{1}{2}$ and $\sigma_i=-1$ with the same probability. Inductive Rademacher Complexity is a concept used to quantify the model complexity. By solving for the upper bound of the Rademacher complexity of the model, we can not only provide a theoretical estimation of how the introduction of algorithm features affects the model complexity but also evaluate its ability to fit and generalize to the training data. These theoretical insights are crucial for assessing the model's generalization capability, conducting effective model selection, controlling model complexity, and driving advancements in algorithm design. In the following, we present Theorem 1 to give an upper bound on the model complexity of AS-LLM$_g$.

\begin{theorem}
\label{bound3}
Let $|\mathcal{S}_{\mathcal{P}}|$ and $|\mathcal{S}_{\mathcal{A}}|$ denote the number of problems and algorithms in training samples, $W^{(i)}$ denote the parameter in the $i$-th layer with the upper bound of $F$-norm $R_i$, i.e., $\|W^{(i)}\|_F\leq R_i$, and $l$ denote the number of layers. Then, for the model $\mathbf{f}\left(x_i\right)$ with $1$-Lipschitz, positive-homogeneous activation functions, the inductive Rademacher complexity of AS-LLM$_g$ is bounded by:
\begin{equation}
\label{eq_res}
\begin{aligned}
\hat{\mathfrak{R}}_{|\mathcal{S}|}\left(\mathcal{F}\right) \leq \frac{\sqrt{2l\log 2 \Gamma_{\mathbf{f}} \Gamma_{\mathcal{S}} + 2 |\mathcal{S}_{\mathcal{P}}|^{\frac{1}{2}} |\mathcal{S}_{\mathcal{A}}|^{\frac{1}{2}} \Gamma^2_{\mathbf{f}} \Gamma_{\mathcal{S}}^{\frac{3}{2}}}}{\sqrt{|\mathcal{S}_{\mathcal{P}}| \cdot |\mathcal{S}_{\mathcal{A}}|}}
\end{aligned}
\end{equation}
where $\Gamma_{\mathbf{f}}$ and $\Gamma_{\mathcal{S}}$ are model-related and data-related variables, denoted as:
\begin{equation}
\begin{aligned}
\Gamma_{\mathbf{f}} = \prod_{i=1}^l R_i, \ \  \Gamma_{\mathcal{S}} = \max_{x_j\in \mathcal{S}}\| x_j \|^2.
\end{aligned}
\end{equation}
\end{theorem}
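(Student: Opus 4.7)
The plan is to follow the peeling-plus-moment-generating-function strategy familiar from Golowich--Rakhlin--Shamir-style deep network Rademacher bounds, adapted to the fact that training data here consists of problem-algorithm pairs indexed over $\mathcal{S}_\mathcal{P} \times \mathcal{S}_\mathcal{A}$. First I would invoke the simplification stated just before the theorem: AS-LLM$_g$ is a single MLP $\mathbf{f}$ with $l$ fully-connected layers whose weight matrices $W^{(i)}$ satisfy $\|W^{(i)}\|_F \leq R_i$. The block-diagonal construction of $W^{(1)}$ makes this reduction faithful, and removing $d$ (which would introduce a non-MLP operation) together with the embedding layer keeps the model purely a composition of affine maps and $1$-Lipschitz positive-homogeneous activations. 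Each input $x_j$ is the concatenation of a problem feature vector and an algorithm feature vector, so the denominator $\sqrt{|\mathcal{S}_\mathcal{P}| \cdot |\mathcal{S}_\mathcal{A}|}$ is simply $\sqrt{|\mathcal{S}|}$ after the training set is factorised across the two index sets.

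The core of the argument is a layer-peeling recursion applied inside an exponential. Using Jensen's inequality, for any $\lambda > 0$,
\[
\exp\!\left(\lambda \,|\mathcal{S}|\, \hat{\mathfrak{R}}_{|\mathcal{S}|}(\mathcal{F})\right) \leq \mathbf{E}_{\boldsymbol{\sigma}}\!\left[\sup_{\mathbf{f}\in\mathcal{F}} \exp\!\left(\lambda \sum_{x_j\in\mathcal{S}} \sigma_j \mathbf{f}(x_j)\right)\right].
\]
I would then peel layers from the outside in: at each level, Cauchy--Schwarz pulls the top weight matrix out of the inner sum at the cost of a factor $R_i$, and Talagrand's contraction lemma strips off the $1$-Lipschitz activation (the positive-homogeneity is what permits the clean form without an additive constant). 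After $l$ peeling steps the accumulated prefactor is $\Gamma_\mathbf{f} = \prod_i R_i$, and the residual is controlled by $\sqrt{\sum_j \|x_j\|^2} \leq \sqrt{|\mathcal{S}|\,\Gamma_\mathcal{S}}$. Taking the logarithm and optimising over $\lambda$ then produces the $\sqrt{2l\log 2}$ prefactor characteristic of this family of bounds.

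The two-term structure under the square root in (\ref{eq_res}) should emerge from splitting the MGF upper bound into a dominant sub-Gaussian contribution scaling like $\Gamma_\mathbf{f}\Gamma_\mathcal{S}$ and a second-order cross term arising from the bilinear coupling between problem and algorithm features in the block-diagonal first layer. A Cauchy--Schwarz pairing of a problem-side norm with an algorithm-side norm, summed separately across the $|\mathcal{S}_\mathcal{P}|$ problems and the $|\mathcal{S}_\mathcal{A}|$ algorithms, yields the prefactor $|\mathcal{S}_\mathcal{P}|^{1/2}|\mathcal{S}_\mathcal{A}|^{1/2}$, while the combination $\Gamma_\mathbf{f}^2 \Gamma_\mathcal{S}^{3/2}$ reflects having to peel the cross term twice while retaining an extra half-power of $\Gamma_\mathcal{S}$ before the outer square root. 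Combining the two contributions, square-rooting, and dividing by $|\mathcal{S}_\mathcal{P}|\cdot|\mathcal{S}_\mathcal{A}|$ recovers (\ref{eq_res}).

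The hardest step, I expect, is reproducing the second term with exactly the stated exponents. This cross term does not appear in the standard Golowich-style bound for a generic MLP on i.i.d.\ inputs, so the derivation must exploit the product structure of $\mathcal{S}_\mathcal{P}\times\mathcal{S}_\mathcal{A}$ in an essential way. Keeping careful track of where Talagrand contraction is legitimately applicable (only to $1$-Lipschitz scalar maps, not to sign-carrying inner products of block-partitioned inputs), and choosing a $\lambda$ that balances the two terms so that neither is inflated by a spurious logarithmic factor, are the delicate bookkeeping points; the remainder of the argument is routine deep-net Rademacher analysis.
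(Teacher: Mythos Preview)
Your high-level plan---Jensen into the MGF, layer peeling \`a la Golowich--Rakhlin--Shamir to reach Eq.~(\ref{eq_CalR}), then optimising over an auxiliary $\lambda$---matches the paper's sketch. The gap is in how you account for the two terms under the square root. You attribute the second term to a ``bilinear coupling between problem and algorithm features in the block-diagonal first layer'' and claim it ``does not appear in the standard Golowich-style bound.'' That is not what happens in the paper, and the proposed mechanism would not work: the block-diagonal $W^{(1)}$ \emph{separates} the two feature groups rather than coupling them, and after peeling you are left with $\Gamma_{\mathbf{f}}\bigl\|\sum_j \sigma_j x_j\bigr\|$, which has no bilinear structure to exploit.

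What the paper actually does after Eq.~(\ref{eq_CalR}) is the standard centering trick: write $\bigl\|\sum_j \sigma_j x_j\bigr\| = \bigl(\bigl\|\sum_j \sigma_j x_j\bigr\| - \mathbf{E}_{\boldsymbol\sigma}\bigl\|\sum_j \sigma_j x_j\bigr\|\bigr) + \mathbf{E}_{\boldsymbol\sigma}\bigl\|\sum_j \sigma_j x_j\bigr\|$. The centred part is sub-Gaussian by the Bounded Differences (McDiarmid) inequality---flipping a single $\sigma_j$ changes the norm by at most $2\|x_j\|$---and this produces the $l\log 2$ term. The mean part is bounded via Jensen by $\sqrt{\sum_j\|x_j\|^2}\le \sqrt{|\mathcal{S}|\,\Gamma_{\mathcal{S}}}$, and this produces the second term. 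The factor $|\mathcal{S}_\mathcal{P}|^{1/2}|\mathcal{S}_\mathcal{A}|^{1/2}$ you were trying to explain is nothing more than $|\mathcal{S}|^{1/2}$; the product structure of the training set enters only through that substitution, not through any cross-coupling argument. So the ``hardest step'' you anticipate is a red herring: replace your bilinear-coupling paragraph with the McDiarmid-plus-Jensen decomposition and the stated exponents fall out directly after minimising over $\lambda$.
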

\begin{proof}
Due to the constraints of space, we present a condensed outline of the proof here. We start by representing Eq. (\ref{eq_rc}) as the computation performed in the final layer of AS-LLM$_g$ as:
\begin{equation}
\label{eq_ini}
\begin{aligned}
\frac{1}{|\mathcal{S}|} \mathbf{E}_{\boldsymbol{\sigma}} \sup _{\mathcal{N}_1^{l-1}, W^{(l)}} \sum_{x_i\in\mathcal{S}} \sigma_i W^{(l)} \phi^{(l-1)}\left(\mathcal{N}_1^{l-1}\left(x_i\right)\right)
\end{aligned}
\end{equation}
where $|\mathcal{S}| = |\mathcal{S}_{\mathcal{P}}| \cdot |\mathcal{S}_{\mathcal{A}}|$, $\mathcal{F}$ is the model space of AS-LLM$_g$, $\mathcal{N}_i^{j}$ denotes the sub-network from $i$-th layer to $j$-th layer in the model $\mathbf{f}\left(x_i\right)$, $W^{(i)}$ denotes the parameters of the $i$-th layer and $\phi^{(i)}(*)$ denotes the $1$-Lipschitz and positive-homogeneous activation function.

By repeatedly utilizing Lemma 1 from the literature \cite{golowich2018size} to expand the upper bound of Eq. (\ref{eq_ini}), we can gradually convert the complexity of the model into an expression that is solely composed of the input instances, without any involvement of intermediate layer outputs in the model:
\begin{equation}
\label{eq_CalR}
\hat{\mathfrak{R}}_{|\mathcal{S}|}\left(\mathcal{F}\right) \leq \frac{1}{|\mathcal{S}|} \log \left[2^l \mathbf{E}_{\boldsymbol{\sigma}} \exp \left(\prod_{i=1}^l R_i \left\|\sum_{x_j\in\mathcal{S}} \sigma_j x_j\right\|\right)\right].
\end{equation}

Based on Eq. (\ref{eq_CalR}), we introduce a parameter $\lambda$ and break down the mathematical expectation expression into two parts that can be proven to have upper bounds respectively. The purpose of this decomposition is to construct a form suitable for the Bounded Differences Inequality \cite{mcdiarmid1989method}. Hence, one of the split parts can be bounded tightly using the Bounded Differences Inequality, while the other part requires the application of the Jensen's Inequality to establish an upper bound. By reassembling the two parts obtained and treating them as a function of the parameter $\lambda$, we minimize the expression and ultimately derive the concise upper bound stated in Eq. (\ref{eq_res}) of Theorem 1.
\end{proof}

Based on Theorem 1, we observe that incorporating algorithm features into the model does not necessarily result in a significant increase in model complexity. Eq. (\ref{eq_res}) reveals that factors contributing to increased model complexity primarily include $\Gamma_{\mathcal{S}}$ and $\Gamma_{\mathbf{f}}$. When algorithm features are introduced, the value of $\Gamma_{\mathcal{S}}$ increases as $\| x_j \|^2$ grows. Simultaneously, a more complex model leads to higher values of $R_i$ (at least in the input layer), thereby increasing $\Gamma_{\mathbf{f}}$. This analysis underscores the effectiveness of the feature selection module in controlling model complexity and emphasizes that incorporating algorithm feature selection enhances the model's generalization since the upper bound of $\hat{\mathfrak{R}}_{|\mathcal{S}|}\left(\mathcal{F}\right)$ can further lead to the bound of generalization error.

Moreover, it is worth noting that when the number of training instances is large, the increase in $|\mathcal{S}_{\mathcal{A}}|$ and $|\mathcal{S}_{\mathcal{P}}|$ effectively reduces model complexity at a rate of approximately $|\mathcal{S}_{\mathcal{P}}|^{\frac{1}{4}}|\mathcal{S}_{\mathcal{A}}|^{\frac{1}{4}}$. Notably, the introduction of algorithm features plays a positive role in reducing model complexity, especially when considering the increased number of algorithms. Hence, Theorem 1 also suggests that the AS-LLM algorithm, based on algorithm features, is particularly suitable for scenarios with a higher number of algorithms, as a larger $|\mathcal{S}_{\mathcal{A}}|$ effectively controls model complexity and improves generalization capacity.

\section{Experiments}

In the following, we examine the results of the empirical studies. Firstly, we will introduce the experimental setup, which encompasses the ASlib benchmark, the comparison algorithms, and the evaluation metrics. Building upon this foundation, we present the experimental results for 10 ASlib scenarios in Section 5.2. Furthermore, in Section 5.3, we provide the results of the ablation experiments.

\begin{table}[t]
\centering
\setlength{\tabcolsep}{0.5mm}
\resizebox{0.47\textwidth}{!}{%
\begin{tabular}{ccccc}
\hline
ASLib Scenario & \#Problem   & \#Algorithm    & \#Feature  & Cutoff Time \\ \hline
BNSL-2016  & 1179 & 8 & 86  & 7200 \\
CSP-MZN-2013  & 4642  & 11  & 155 & 1800 \\
GLUHACK-18  & 353 & 8 & 50 & 5000 \\
MAXSAT-PMS-2016  & 601 & 19 & 37 & 1800 \\
MAXSAT15-PMS-INDU  & 601 & 29 & 37 & 1800 \\
PROTEUS-2014  & 4021 & 22 & 198 & 3600 \\
QBF-2016  & 825 & 24 & 46 & 1800 \\
SAT03-16-INDU  & 2000 & 10 & 483 & 5000 \\
SAT18-EXP  & 353 & 37 & 50 & 5000 \\
TSP-LION2015  & 31060 & 4 & 122 & 3600 \\
\hline
\end{tabular}%
}
\caption{The statistical property of experimental benchmarks.}
\label{tab_dataset}
\end{table}

\begin{table*}[t]
\centering
\setlength{\tabcolsep}{3mm}
\resizebox{1,0\textwidth}{!}{%
\begin{tabular}{c|cc|cccccc}
\hline
Scenario & VBS   & SBS   & ISAC   & MCC    & SATzilla11   & SNNAP   & SUNNY    & AS-LLM    \\ \hline
BNSL-2016  & 211.09  & 8849.64 & 6317.29 & 3786.08 & 2004.23 & 37366.23 & 3860.89 & \textbf{1385.58}  \\
CSP-MZN-2013  & 239.60 & 6234.09 & 2181.34 & 2072.73 & 1444.32 & 16857.15 & \textbf{879.25} & 2790.79  \\
GLUHACK-18  & 906.39 & 17829.03 & 15558.97 & 8098.13 & 8113.07 & 20437.53 & 10513.81 & \textbf{7247.89}  \\
MAXSAT-PMS-2016  & 37.76 & 697.02 & 962.44 & 1051.33 & 562.27 & 15692.93 & 476.75 & \textbf{367.85}  \\
MAXSAT15-PMS-INDU  & 54.20 & 1057.40 & 985.36 & 1260.90 & 470.48 & 16527.11 & 411.91 & \textbf{365.98}  \\
PROTEUS-2014  & 170.89 & 10242.71 & 6890.79 & 9109.51 & \textbf{6508.37} & 22871.71 & 6518.17 & 7263.73  \\
QBF-2016  & 17.87 & 3292.44 & 2106.43 & 2333.24 & 2780.95 & 11310.70 & 2101.89 & \textbf{1851.23}  \\
SAT03-16-INDU  & 823.02 & 5198.46 & 4725.42 & 4466.92 & 4266.63 & 10406.79 & 4464.43 & \textbf{4123.52}  \\
SAT18-EXP  & 1705.43 & 11945.73 & 10390.83 & 9760.77 & 8949.86 & 44091.95 & 8112.77 & \textbf{7796.62}  \\
TSP-LION2015  & 44.63 & 189.65 & 1118.56 & 2673.78 & 1839.47 & 10218.48 & 778.08 & \textbf{345.37}  \\
\hline
\end{tabular}%
}
\caption{Evaluation Results on ASlib Benchmarks.}
\label{tab_evaluation}
\end{table*}

\subsection{Experiment Settings}

\paragraph{Dataset Description:} ASlib (Algorithm Selection Library) Benchmark\footnote{\url{https://www.coseal.net/aslib/}} \cite{bischl2016aslib} is a standardized dataset for algorithm selection problems, aimed at providing a common benchmark for evaluating and comparing the performance of different algorithm selection methods. The ASlib Benchmark includes problem instances from multiple domains and their associated algorithm performance data. Each problem instance is described by a set of features, such as the properties and constraints of the problem. The algorithm performance data associated with it consists of a set of algorithms and their performance measurements on each problem instance, such as runtime or solution quality. In line with the experimental setup of most existing studies, this paper selects 10 scenarios from ASlib, covering different domains and scales, to validate the effectiveness of the proposed algorithms. These datasets come from different scenarios and encompass algorithms and problem scales of various magnitudes, whose statical information is shown in Table \ref{tab_dataset}.

\paragraph{Comparing Approaches:} Five classic algorithm selection methods, including Instance-Specific Algorithm Configuration (ISAC) \cite{kadioglu2010isac}, Multi-Class Classification algorithm selection (MCC) \cite{xu2011hydra}, SATzilla11 \cite{xu2011hydra}, Solver-based Nearest Neighbor for Algorithm Portfolios (SNNAP) \cite{collautti2013snnap}, and SUb-portfolio Selection using Nearest Neighbor in a lazY manner (SUNNY) \cite{amadini2014sunny}, participate in the comparative experiments in this paper. Additionally, the two representative performance, virtual best solver (VBS) and single best solver (SBS), are also compared in the experimental results. VBS represents the most ideal solver, capable of selecting the optimal algorithm for each problem instance. SBS, on the other hand, is the most direct and simplest solver, selecting the solver with the overall best performance without distinguishing between problem instances. By comparing the algorithm selection methods with SBS and VBS, we can assess the performance more intuitively. The ideal performance should lie between the performances of SBS and VBS, and the closer the performance is to that of VBS, the better the method is considered.

\paragraph{Evaluation metrics:} Most scenarios in ASlib focus on the algorithm's solution time, and Penalized Average Runtime at a factor of 10 (PAR10 score) is widely used as the performance metric in algorithm selection research. Specifically, the PAR10 score of instance $p$ is calculated as follows:
\begin{equation}
\operatorname{PAR} 10(p)=\left\{\begin{array}{ll}
t_p & \text { if } t_p \leqslant C \\
10 \cdot C & \text { else }
\end{array} .\right.
\end{equation}
For each problem instance $p$, the actual running time $t_p$ of the selected algorithm is compared to a predetermined cutoff time $C$, provided in Table \ref{tab_dataset}. If the selected algorithm finds a solution within the cutoff time, the actual running time is recorded. Otherwise, it incurs a penalty of 10 times the cutoff time $10\cdot C$. Finally, the PAR10 score is obtained by averaging the results across all problem instances. PAR10 score takes into account both the algorithm's solution time and timeout situations. A lower PAR10 score indicates a more effective algorithm selection method.

\subsection{Performance Comparison}

While all comparison algorithms utilize problem features as input, AS-LLM uniquely incorporates algorithm code or relevant text, which are processed by the respective LLMs (UniXCoder \cite{guo2022unixcoder} and BGE \cite{xiao2023c}) to extract algorithm features. The training process of AS-LLM involves two stages. Firstly, we train the complete model, which includes the feature selection module. Using the learned parameters from the feature selection module, we determine the dimensions to be retained. Subsequently, we retrain the main model using these selected algorithm features. The main model can be obtained through various training approaches, such as constructing a regression model to predict algorithm performance, or building a classification model to assess the compatibility between algorithms and specific problem types, or directly ranking the degree of match between a certain problem and different algorithms. In this study, we primarily use regression modeling for performance prediction and classification modeling for matching/mismatching identification, and record the superior results obtained from both approaches.

To ensure a fair comparison between AS-LLM and existing classical approaches, we adopt a consistent procedure. Prior to each experiment, we randomly select $80\%$ of the problems as the training set, while the remaining $20\%$ constitute the test set. For AS-LLM, our process involves selecting the appropriate LLM based on the algorithm's source text, followed by fine-tuning the network's hyperparameters. This includes adjusting the number of layers in each MLP, choosing suitable activation functions, determining the number of neurons in each hidden layer of the components, adjusting the parameters in feature selection module, and determining the incorporation of the MLP module after similarity calculation. Additionally, we explore the fusion of two distinct algorithm representation vectors, i.e., the LLM representation and embedding layer representation, and assign pre-defined weights $\alpha$ in Eq. (\ref{eq_6}) before conducting the experiments. We record and analyze the optimal results achieved across various hyperparameter configurations. For the comparative algorithms, we follow a similar approach in adjusting the hyperparameters. The evaluation results, presented in Table \ref{tab_evaluation}, showcase the PAR10 scores for each comparing method, where the best results for each scenario are highlighted in boldface.

Analyzing Table \ref{tab_evaluation}, it is evident that AS-LLM outperforms other methods in eight out of the ten datasets, while still exhibiting competitive performance in the CSP-MZN-2013 and PROTEUS-2014 scenarios. These results highlight the significant enhancement in algorithm selection effectiveness achieved by AS-LLM through its consideration of algorithm features. The incorporation of algorithm features in AS-LLM provides supplementary information, which is effectively extracted by the LLM and embedding layers. Consequently, AS-LLM trains a highly efficient model that capitalizes on the bidirectional adaptation between problems and algorithms. Due to the nature of the algorithm selection task and the PAR10 metric, all methods exhibit large variances in PAR10 scores across different sampled data. Nevertheless, AS-LLM still demonstrates generally stable performance. It should be noted, however, that AS-LLM exhibits some limitations in the CSP-MZN-2013 and PROTEUS-2014 scenarios. This can be attributed to the unavailability of complete algorithm code files in these scenarios and the insufficient description of algorithm characteristics through code-related text, resulting in inadequate model training.

\subsection{Ablation Study}

\begin{figure*}[t]
\begin{center}
\includegraphics[width=0.84\textwidth]{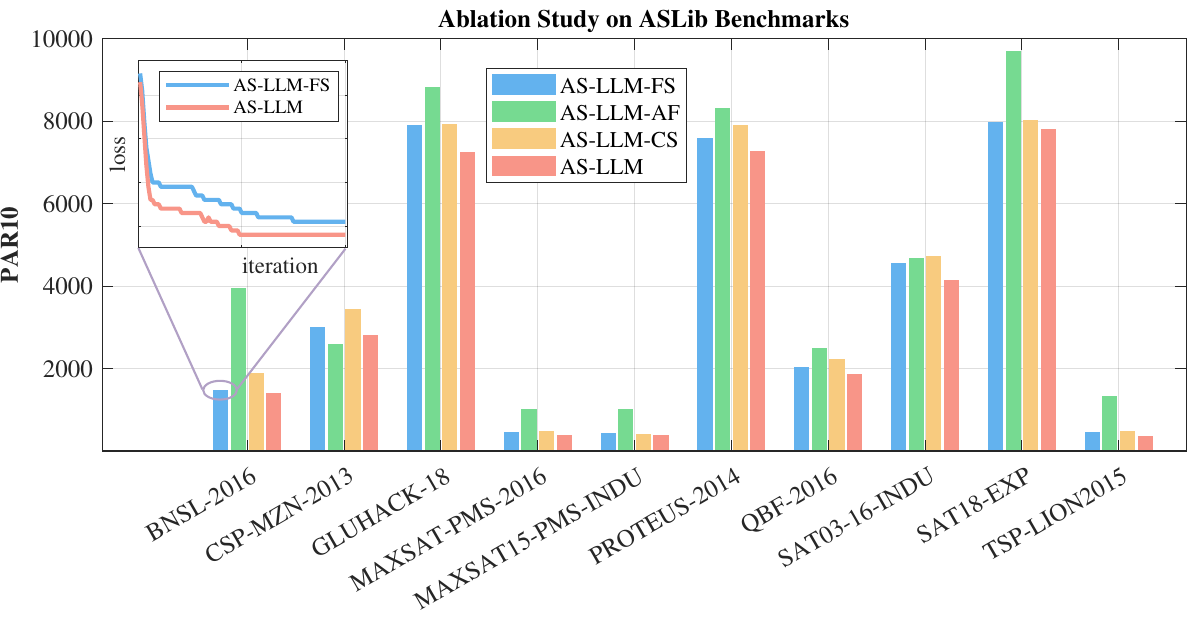}
\end{center}
\caption{Ablation study on ASlib benchmarks.}
\label{Fig_Ablation}
\end{figure*}

To investigate the performance improvements brought by the key modules in AS-LLM, we conduct ablation experiments in this section, including AS-LLM-AF, AS-LLM-FS, and AS-LLM-CS. AS-LLM-AF removes the algorithm feature computation module from AS-LLM, relying solely on problem features for algorithm selection. AS-LLM-FS eliminates the feature selection process in AS-LLM to assess its impact on the model. AS-LLM-CS excludes the cosine similarity calculation module, using only MLP to derive the final results from the algorithm and problem representations. The results of the ablation experiments are presented in Figure \ref{Fig_Ablation}.

The findings from Figure \ref{Fig_Ablation} reveal that, except for the CSP-MZN-013 scenario, AS-LLM outperforms all comparative methods, indicating the positive effects of the tested modules on the model. Notably, AS-LLM-AF exhibits the largest performance loss, underscoring the crucial role of algorithm features in algorithm selection and validating the core innovation of this study. In the CSP-MZN-013 scenario, the superiority of AS-LLM-AF over AS-LLM can be attributed to the unreliability of the algorithm feature source, specifically the inadequate representation of algorithm features in algorithm description texts. This highlights the dependence of AS-LLM on reliable algorithm data, preferably in the form of code files for all candidate algorithms. Additionally, besides demonstrating the performance loss of AS-LLM-FS compared to AS-LLM, the subplot in the top-left corner of Figure \ref{Fig_Ablation} showcases an example of the decreasing loss curves during training epochs for both approaches. This indicates that the feature selection module not only enhances algorithm selection performance but also facilitates easier and faster model convergence. The observed performance loss of AS-LLM-CS indicates the utility of the cosine distance computation, particularly when there is no requirement for generalizing the model to new candidate algorithms. Although the theoretical analyses have not explicitly demonstrated the cosine distance's effectiveness, this empirical result highlights its practical value. Overall, the ablation experiments underscore the necessity of incorporating algorithm features and the utility of each module within AS-LLM.

\section{Conclusion}

This paper explores the importance of algorithm features and their utilization in algorithm selection tasks. Specifically, the proposed AS-LLM leverages the powerful representation capabilities of LLMs to extract algorithm features from text related to code, representing a novel application of LLMs in this context. Additionally, AS-LLM incorporates a feature selection module to identify critical features and utilizes the similarity between algorithm and problem representations for algorithm selection. AS-LLM provides more refined modeling of the bidirectional relationship between algorithms and problems, demonstrating robust performance advantages in diverse scenarios. This paper not only emphasizes the performance superiority of AS-LLM through empirical studies but also provide a rigorous upper bound on its model complexity. This theoretical validation serves as a fundamental basis for model design and practical implementation. Through its theoretical and methodological contributions, we firmly believe that the AS-LLM holds significant application potential in algorithm selection tasks.

While AS-LLM has shown promising results through algorithm representation, its performance might still be unstable in certain situations. In particular, AS-LLM is dependent on the availability of reliable algorithm data, preferably in the form of code files for all candidate algorithms. In scenarios where algorithm code files are not available or the description texts do not adequately represent algorithm characteristics, the model's training and performance could be adversely affected. Moreover, the use of multiple modules in AS-LLM also increases its complexity, which could affect scalability and adaptability to different algorithm selection scenarios. Moving forward, there are exciting possibilities for exploring advanced algorithm representation methods and feature selection techniques. Additionally, considering enhancements to AS-LLM for algorithm selection in specific domains holds promise. Furthermore, delving into the influence of algorithm features on algorithm selection models is a compelling direction for further theoretical investigation.

\section*{Acknowledgments}

This work was supported by the Research Grants Council of the Hong Kong SAR (Grant No. PolyU11211521, PolyU15218622, PolyU15215623,  PolyU25216423, and C5052-23G), The Hong Kong Polytechnic University (Project IDs: P0039734, P0035379, P0043563, and P0046094), and the National Natural Science Foundation of China (Grant No. U21A20512, and 62306259). We also appreciate the comments from anonymous reviewers, which helped to improve the paper.

\bibliographystyle{named}
\bibliography{ijcai24}

\end{document}